\documentclass{article}

\usepackage[final,nonatbib]{neurips_2024}

\usepackage[utf8]{inputenc} 
\usepackage[T1]{fontenc}    
\usepackage{url}            
\usepackage{booktabs}       
\usepackage{amsfonts}       
\usepackage{nicefrac}       
\usepackage{microtype}      
\usepackage{amsmath,amsthm}
\usepackage{multirow}
\usepackage{graphicx}
\usepackage{subfigure}
\usepackage{cite}

\usepackage{thmtools, thm-restate}
\usepackage{amssymb}
\usepackage{algorithmic}
\usepackage{wrapfig}
\usepackage[table]{xcolor}
\definecolor{lightgray}{gray}{0.95}
\definecolor{cvprblue}{rgb}{0.21,0.49,0.74}
\usepackage[pagebackref,breaklinks,colorlinks,citecolor=cvprblue]{hyperref}

\newcommand{\eg}{\textit{e.g.}}
\newcommand{\ie}{\textit{i.e.}}

\usepackage[ruled,vlined,linesnumbered,noresetcount]{algorithm2e}

\title{Replay-and-Forget-Free Graph Class-Incremental Learning: A Task Profiling and Prompting Approach}

\author{
Chaoxi Niu$^{1}$, Guansong Pang$^{2}$\thanks{Corresponding author: G. Pang (\texttt{gspang@smu.edu.sg})}, Ling Chen$^{1}$, Bing Liu$^{3}$ \\
$^{1}$ AAII, University of Technology Sydney, Australia\\
$^{2}$ School of Computing and Information Systems, Singapore Management University, Singapore\\
$^{3}$ Department of Computer Science, University of Illinois at Chicago, USA\\
\texttt{chaoxi.niu@student.uts.edu.au}, \quad  \texttt{gspang@smu.edu.sg} \\
\texttt{ling.chen@uts.edu.au}, \quad \texttt{liub@uic.edu} \\
}

\begin{document}
\maketitle
\begin{abstract}
Class-incremental learning (CIL) aims to continually learn a sequence of tasks, with each task consisting of a set of unique classes. Graph CIL (GCIL) follows the same setting but needs to deal with graph tasks (\eg, node classification in a graph). The key characteristic of CIL lies in the absence of task identifiers (IDs) during inference, which causes a significant challenge in separating classes from different tasks (\ie, \textit{inter-task class separation}). Being able to accurately predict the task IDs can help address this issue, but it is a challenging problem. In this paper, we show theoretically that accurate task ID prediction on graph data can be achieved by a Laplacian smoothing-based graph task profiling approach, in which each graph task is modeled by a task prototype based on Laplacian smoothing over the graph. It guarantees that the task prototypes of the same graph task are nearly the same with a large smoothing step, while those of different tasks are distinct due to differences in graph structure and node attributes. Further, to avoid the \textit{catastrophic forgetting} of the knowledge learned in previous graph tasks, we propose a novel \textit{graph prompting} approach for GCIL which learns a small discriminative graph prompt for each task, essentially resulting in a separate classification model for each task. The prompt learning requires the training of a single graph neural network (GNN) only once on the first task, and no data replay is required thereafter, thereby obtaining a GCIL model being both \textbf{replay-free} and \textbf{forget-free}. Extensive experiments on four GCIL benchmarks show that i) our task prototype-based method can achieve 100\% task ID prediction accuracy on all four datasets, ii) our GCIL model significantly outperforms state-of-the-art competing methods by at least 18\% in average CIL accuracy, and iii) our model is fully free of forgetting on the four datasets. Code is available at \renewcommand\UrlFont{\color{blue}}\url{https://github.com/mala-lab/TPP}.
\end{abstract}

\section{Introduction}
Graph continual learning (GCL) \cite{zhang2024continual,tian2024continual,febrinanto2023graph,zhang2022cglb} aims to continually learn a model that not only accommodates the new emerging graph data but also maintains the learned knowledge of previous graph tasks, with each graph task comprising nodes from a set of unique classes in a graph. Due to privacy concerns and the hardware limitations in storage and computation, GCL assumes that the data of previous graph tasks is not accessible when learning new graph tasks. This leads to \textit{catastrophic forgetting} of the learned knowledge, \ie, degraded classification accuracy on previous tasks due to model updating on new tasks. 

\textit{Graph class-incremental learning} (GCIL) is one key setting of GCL, in which task identifiers (IDs) are not provided during inference. \textit{Graph task-incremental learning} (GTIL) is another GCL setting where the task ID is given for each test sample. As a result, a set of separate classifiers can be learned for different graph tasks in GTIL and the task-specific classifier can be used for each test sample. Compared to GTIL, the absence of task IDs in GCIL presents an additional challenge, known as \textit{inter-task class separation} \cite{kim2022theoretical,kim2023learnability,lin2024class}, \ie, the class separation in one graph task is obstructed by the presence of classes from other tasks. Consequently, the classification performance in GCIL is typically far below that in GTIL \cite{zhou2021overcoming,liu2021overcoming,zhang2022sparsified,zhang2023ricci,liu2023cat,niu2024graph}. This paper focuses on the GCIL setting -- a more compelling GCL problem -- aiming to bridge the performance gap between GCIL and GTIL.

Existing GCL methods often alleviate the catastrophic forgetting through preserving important model parameters of previous tasks \cite{liu2021overcoming}, continually expanding model parameters for new tasks \cite{zhang2022hierarchical,zhangsigir23}, or augmenting with a memory module for data replay \cite{zhou2021overcoming,zhang2022sparsified,zhang2023ricci,niu2024graph,liu2023cat,niu2024graph}. However, their ability to handle the inter-task class separation is limited, leading to poor GCIL classification accuracy, especially the accuracy of the previous graph tasks. Thus, existing GCL methods typically show substantially higher forgetting in GCIL than in GTIL.

To address these issues, we introduce a novel GCIL approach, namely \textbf{Task Profiling and Prompting (TPP)}. In particular, we reveal for the first time theoretically and empirically that the task ID of a test sample can be accurately predicted by using a Laplacian smoothing-based method to profile each graph task with a prototypical embedding. 
With the existence of edges between nodes, this task profiling method guarantees that the task prototypes of the same graph task are nearly the same with a large smoothing step, while those of different tasks are distinct due to differences in graph structure and node attributes. High task ID prediction accuracy helps confine the classification space of the test samples to the classes of the predicted task (\eg, Task 1 or 2 in Fig. \ref{motivation}b,c) instead of all the learned tasks (\eg, both tasks as in Fig. \ref{motivation}a), eliminating the inter-task class separation issue. There have been some studies on task ID prediction \cite{lin2024class,kim2022theoretical,kim2023learnability}, but they are designed for Euclidean data that are i.i.d. (independent and identically distributed). As a result, they fail to leverage the graph structure and node attributes in the non-Euclidean graph data and are not suited for graph task ID prediction. 

\begin{figure}
\centering
\includegraphics[width=0.99\textwidth]{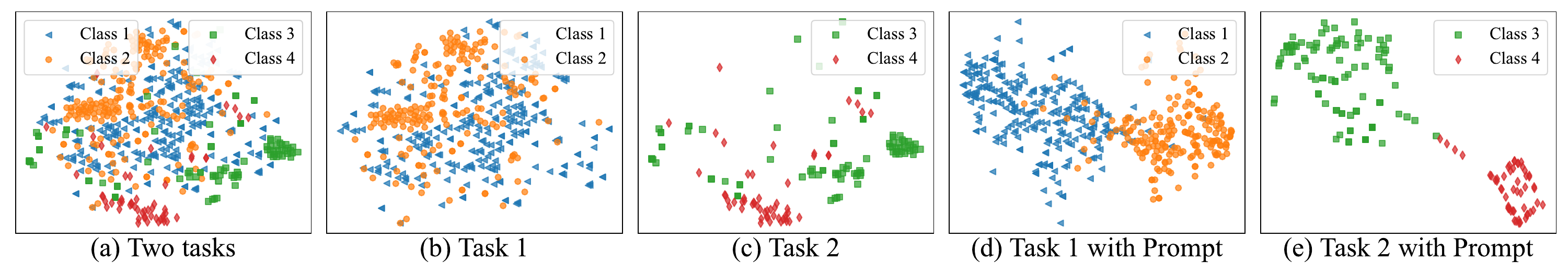}
\caption{\textbf{(a)} Classification space of two graph tasks when no task ID is provided. The classification space is split into two separate spaces in Task 1 in \textbf{(b)} and Task 2 in \textbf{(c)} when the task ID can be accurately predicted. This helps alleviate the inter-task class separation issue. To mitigate catastrophic forgetting, we learn a graph prompt for each task that absorbs task-specific discriminative information for better class separation within each task, as shown in \textbf{(d)} and \textbf{(e)} respectively. This essentially results in a separate classification model for each task, achieving fully forget-free GCIL models.}
\label{motivation}
\end{figure}

To address the catastrophic forgetting problem, we further propose a novel graph prompting approach for GCIL. Specifically, we optimize a single, small learnable prompt using a simple frozen pre-trained graph neural network (GNN) to capture the task-specific knowledge for each graph task during training. Despite being small, the task-specific knowledge learned in the prompts can ensure the intra-task class separation, as shown in Fig. \ref{motivation}d,e. At test time, given a test graph, the task prototype constructed with its structure and node attributes is utilized for task ID prediction, and the graph prompt of the predicted task is incorporated into the test graph for classification with the GNN. Since the graph prompts are learned task by task, \textit{no data replay} is required in our TPP model. Further, the graph prompts are task-specific, so we essentially have a separate classification model for each graph task, \ie, no continual model updating, completely avoiding the forgetting problem (\ie, \textit{forget-free}).

Overall, this work makes the following main contributions. \textbf{(1)}: We propose a novel graph task profiling and prompting (TPP) approach, which is the first replay- and forget-free GCIL approach. \textbf{(2)}: We reveal theoretically that a simple Laplacian smoothing-based graph task profiling approach can achieve accurate graph task ID prediction. To the best of our knowledge, it is the first work that leverages the non-Euclidean properties of graph data to enable graph task ID prediction. It achieves 100\% prediction accuracy across all four datasets used, eliminating the inter-task class separation issue in GCIL. \textbf{(3)}: We further introduce a novel graph prompting approach that learns a small prompt for each task using a frozen pre-trained GNN, without any data replay involved. With the support of our accurate task ID prediction, the graph prompts result in a separate classification model for each task, resulting in the very first GCIL model being both replay-free and forget-free. \textbf{(4)}: Extensive experiments on four GCIL benchmarks show that our TPP model significantly outperforms state-of-the-art competing methods by at least 18\% in average CIL accuracy while being fully forget-free. It even exceeds the joint training on all tasks simultaneously by a large margin.

\section{Related Work}
\noindent\textbf{Graph Continual Learning.}
Various methods have been proposed for GCL \cite{he2023dynamically,zhou2021overcoming,liu2021overcoming,sun2023self,wang2022lifelong,evolve,rakaraddi2022reinforced,zhangsigir23,perini2022learning,zhang2022sparsified,zhang2023ricci,zhang2023continual} and can be divided into three categories, \ie, regularization-based, parameter isolation-based, and data replay-based methods. Regularization-based methods typically preserve parameters that are important to the previous tasks when learning new tasks via additional regularization terms. For example, TWP \cite{liu2021overcoming} preserves the important parameters in the topological aggregation and loss minimization for previous tasks via regularization terms. Parameter isolation-based methods maintain the performance on previous tasks by continually introducing new parameters for new tasks such as \cite{zhangsigir23} proposes to continually expand model parameters to learn new emerging graph patterns. Differently, replay-based methods \cite{zhou2021overcoming,zhang2022sparsified,zhang2023ricci,liu2023cat,niu2024graph} employ an additional memory buffer to store the information of previous tasks and replay them when learning new tasks. The ways to construct the memory buffer play a vital role in replay-based methods. Despite they have shown good performance in alleviating the forgetting problem, inter-class separation is still a significant challenge to these methods, especially for the CIL setting where the task IDs are not provided when testing.

To improve the CIL performance, an emerging research direction focuses on performing task ID prediction during testing. For example, CCG \cite{abati2020conditional} utilizes a separate network for task identification. HyperNet \cite{von2019continual} and PR-Ent \cite{henning2021posterior} use the entropy to predict the task of the test sample. More recently, \cite{kim2022theoretical} proves that OOD detection is the key to task ID prediction and proposed an identification method based on an OOD detector. TPL \cite{lin2024class} further improves it by exploiting the information available in CIL for better task identification. Since these methods were designed for Euclidean data, they are not suited for GCIL. Following this line, we propose a task ID prediction method specifically for GCIL in this paper. Different from previous methods that rely on additional networks or OOD detectors, the proposed task identification is accomplished by using a Laplacian smoothing-based method to profile each graph task with a prototypical embedding. Despite its simplicity, this graph task profiling method can achieve accurate task ID prediction with theoretical support.

\noindent\textbf{Prompt Learning.}
Originating from natural language processing, prompt learning aims to facilitate the adaptation of frozen large-scale pre-trained models to various downstream tasks by introducing learnable prompts \cite{liu2023pre}. In other words, prompt learning designs task-specific prompts to instruct the pre-trained models to perform downstream tasks conditionally. The prompts capture the knowledge of the corresponding tasks and enhance the compatibility between inputs and pre-trained models. Recently, prompt-based graph learning methods have also been proposed \cite{sun2023graph}, which aims to unify multiple graph tasks \cite{sun2023all} or improve the transferability of graph models \cite{gpf}. Due to the ability to leverage the strong representative capacity of the pre-trained model and learn the knowledge of tasks in prompts, many prompting-based continual learning methods have been proposed \cite{wang2022learning,wang2022dualprompt,wang2024comprehensive} and achieved remarkable success without employing replaying memory or regularization terms. Despite that, no work is done on prompt learning for GCIL. The main challenge is the lack of pre-trained GNN models for all tasks in GCIL and the absence of task IDs to retrieve corresponding prompts during testing. In this work, we show that effective graph prompts can be learned for different tasks using a GNN backbone trained based on the first task with graph contrastive learning \cite{you2020graph,zhu2020deep}, and our task ID prediction method and the graph prompts can be synthesized to address the GCIL problem.

\section{Methodology}

\subsection{The GCIL Problem} 
Formally, GCL can be formulated as learning a model on a sequence of connected graphs (tasks) $\{ \mathcal{G}^1, \ldots, \mathcal{G}^T\}$ where $T$ is the number of tasks. Each $\mathcal{G}^t = (A^t, X^t)$ is a newly emerging graph, where $A^t$ denotes the relations between $N$ nodes of the current/new task, $X^t\in \mathbb{R}^{N \times F}$ represents the node attributes with dimensionality of $F$, and the labels of nodes can be denoted as $Y^t$. Each task contains a unique set of classes in a graph, i.e., \{$Y^t \cap Y^j = \varnothing | t \neq j$\}. When learning task $t$, the model trained from previous tasks only has access to the current task data $\mathcal{G}^t$. The goal is to accommodate the model to current graph $\mathcal{G}^t$ while maintaining the classification performance on the previous graphs $\{\mathcal{G}^1, \ldots, \mathcal{G}^{t-1}\}$. In GCIL, the task IDs are not available during inference. Thus, assuming that each task has $C$ classes, after learning all tasks, a GCIL model is required to classify a test instance into one of all the $T\times C$ classes.

\subsection{Overview of The Proposed TPP Approach}\label{sec:overview}
Inspired by prior studies \cite{kim2022theoretical,kim2023learnability}, we decompose the class probability of a test sample $\mathbf{x}^{\text{test}}$ belonging to the $j$-th class in task $t$ in GCIL into two parts :
\begin{equation}\label{decompcil}
    H(y^{t}_j|\mathbf{x}^{\text{test}}) = H(y^{t}_j|\mathbf{x}^{\text{test}},t)H(t|\mathbf{x}^{\text{test}})\, ,
\end{equation}
where $H(t|\mathbf{x}^{\text{test}})$ represents the task ID prediction probability of task $t$ and $H(y^{t}_j|\mathbf{x}^{\text{test}},t)$ denotes the prediction within the task $t$. This indicates that accurate GCIL classification accuracy can be achieved when both accurate task ID prediction and intra-task class classification are achieved. 

To this end, in this paper, we propose the Task Profiling and Prompting (\textbf{TPP}) approach for GCIL. As shown in Fig. \ref{framework}, a novel Laplacian smoothing-based task profiling approach is first devised in TPP for graph task ID prediction, which can well guarantee the task prediction accuracy as we will demonstrate theoretically below. Moreover, to obtain accurate intra-task class classification within the identified task, a novel graph prompting approach is further proposed to learn a small prompt for each task using a frozen GNN pre-trained on the first graph task. By learning and storing task knowledge separately, there is no knowledge interference between tasks during training, resulting in a model being both replay-free and forget-free. During inference, given a test sample, TPP first performs the task ID prediction and then retrieves the corresponding task graph prompt to concatenate with the sample for the GCIL classification. Below we introduce the TPP approach in detail.

\begin{figure}
    \centering
    \includegraphics[width=0.98\textwidth]{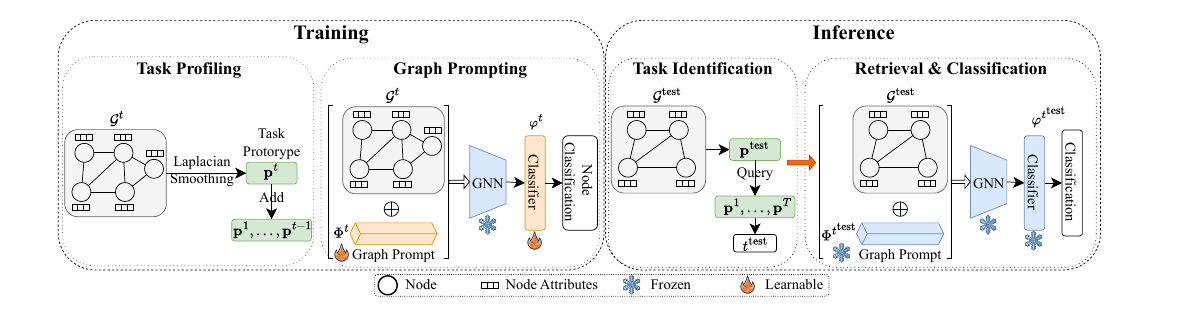}
    \caption{Overview of the proposed TPP approach. During training, for each graph task $t$, the task prototype $\mathbf{p}^t$ is generated by applying Laplacian smoothing on the graph $\mathcal{G}^t$ and added to $\mathcal{P}=\{\mathbf{p}^1, \ldots, \mathbf{p}^{t-1}\}$. At the same time, the graph prompt $\Phi^t$ and the classification head $\varphi^t$ for this task are optimized on $\mathcal{G}^t$ through a frozen pre-trained GNN. During inference, the task ID of the test graph is first inferred (\ie, task identification). Then, the graph prompt and the classifier of the predicted task are retrieved to perform the node classification in GCIL. The GNN is trained on $\mathcal{G}^1$ and remains frozen for subsequent tasks.}
    \label{framework}
\end{figure}

\subsection{Laplacian Smoothing-based Task Profiling for Graph Task ID Prediction}
To leverage the graph structure and node attribute information, we propose to use a Laplacian smoothing approach to generate a prototypical embedding for each graph task for task ID prediction. Specifically, for the task $t$ with graph data $\mathcal{G}^{t} = (A^t, X^t)$, we construct a task prototype $\mathbf{p}^t$ for this task based on the train set denoted as $\{x_i|i \in \mathcal{V}^{t}_{\text{train}}\}$, where $\mathcal{V}^{t}_{\text{train}}$ is the train set of $\mathcal{G}^{t}$. 
Given $\mathcal{G}^t$, the Laplacian smoothing is first applied on the graph $\mathcal{G}_{t}$ to obtain the smoothed node embeddings $Z^t$:
\begin{equation}\label{lp}
    Z^t = (I - (\hat{D}^{t})^{-\frac{1}{2}}\hat{L}^t(\hat{D}^{t})^{-\frac{1}{2}})^sX^t \, ,
\end{equation}
where $s$ denotes the number of Laplacian smoothing steps, $I$ is an identity matrix, and $\hat{L}^t$ is the graph Laplacian \cite{chung1997spectral} matrix of $\hat{A}^t = A^t + I$ (\ie, $\hat{L}^t = \hat{D}^t - \hat{A}^t$ with $\hat{D}^t$ being the diagonal degree matrix of $\hat{A}^t$ and $\hat{D}_{ii}^t = \sum_j \hat{a}_{ij}$). Then, the task prototype $\mathbf{p}^t$ is constructed by averaging the smoothed embeddings of train nodes:
\begin{equation}\label{train_prototype}
    \mathbf{p}^t = \frac{1}{|\mathcal{V}^t_{\text{train}}|}\sum_{i \in \mathcal{V}^t_{\text{train}}} \mathbf{z}_i^t(\hat{D}^t_{ii})^{-\frac{1}{2}}\, .
\end{equation}

Similarly, the task prototypes for all tasks can be separately constructed and stored, denoted as $\mathcal{P} = \{\mathbf{p}^1, \ldots, \mathbf{p}^T\}$. Given a test graph $\mathcal{G}^{\rm test}$ at testing time, we predict the task ID of $\mathcal{G}^{\rm test}$ by querying the task prototype pool $\mathcal{P}$. Specifically, the task prototype of $\mathcal{G}^{\rm test}$ is obtained with the set of test nodes in a similar way as on training graphs via Eq.~\eqref{lp} and Eq.~\eqref{train_prototype}, \ie,
\begin{equation}\label{test_prototype}
    \mathbf{p}^{\text{test}} = \frac{1}{|\mathcal{V}^{\text{test}}|}\sum_{i \in \mathcal{V}^{\text{test}}} \mathbf{z}_i^{\text{test}}(\hat{D}^{\text{test}}_{ii})^{-\frac{1}{2}}\, ,
\end{equation}
where $\mathcal{V}^{\text{test}}$ denotes the set of nodes to be classified in $\mathcal{G}^{\rm test}$ and $\textbf{z}_i^{\text{test}}$ is the smoothed embedding of the test node $i$ after $s$-step Laplacian smoothing. Then, we query the task prototype pool $\mathcal{P}$ with the test prototype $\mathbf{p}^{\text{test}}$ and return the task ID whose task prototype is most similar to $\mathbf{p}^{\text{test}}$:
\begin{equation}\label{testid}
    t^{\text{test}} = \arg \min (d(\mathbf{p}^{\text{test}}, \mathbf{p}^1), \ldots, d(\mathbf{p}^{\text{test}}, \mathbf{p}^T)))\, ,
\end{equation}
where $d(\cdot)$ represents an Euclidean distance function and $t^{\text{test}}$ is the predicted task ID of $\mathcal{G}^{\rm test}$. 

As discussed in Sec. \ref{sec:overview}, more accurate task ID prediction leads to better classification performance for GCIL. Below we show theoretically that the task ID of the test graphs can be accurately predicted with our simple Laplacian smoothing-based task profiling approach.

\begin{restatable}{theorem}{resta}\label{theorem}
If graphs for all tasks are not isolated and the test graph $\mathcal{G}^{\text{test}}$ comes from the task $t$, \ie, $\mathcal{G}^{\text{test}}$ and $\mathcal{G}^{t}$ have the same set of classes, then the distance between $\mathbf{p}^{\text{test}}$ and $\mathbf{p}^t$ approaches to zero with a sufficiently large number of Laplacian smoothing steps $s$:
\begin{equation}
    \lim_{s \to +\infty} d(\mathbf{p}^{\text{test}}, \mathbf{p}^t) = 0\, .
\end{equation}
\end{restatable}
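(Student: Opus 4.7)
The plan is to exploit the spectral convergence of repeated symmetric-normalized smoothing on a graph with self-loops, and then show that after the $(\hat{D}^t_{ii})^{-1/2}$ rescaling the per-node limit is a single vector depending only on the graph, so that averaging over any node subset (train or test) yields the same limit.

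First, I would rewrite the smoothing operator as $I - (\hat{D}^t)^{-1/2}\hat{L}^t(\hat{D}^t)^{-1/2} = (\hat{D}^t)^{-1/2}\hat{A}^t(\hat{D}^t)^{-1/2} =: P^t$, so that $Z^t = (P^t)^s X^t$. The matrix $P^t$ is real symmetric with spectrum contained in $[-1,1]$, which turns the problem into a standard power-iteration analysis.

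Next, I would analyze the spectrum of $P^t$. Because $\hat{A}^t = A^t + I$ has a strictly positive diagonal (self-loops), the associated random walk is aperiodic, so $-1$ is not an eigenvalue. Under the non-isolated hypothesis (read as: every node has a non-trivial neighbourhood, and in the simplest case the graph is connected), the top eigenvalue $1$ is simple with unit eigenvector $v^t = (\hat{D}^t)^{1/2}\mathbf{1}/\bigl\|(\hat{D}^t)^{1/2}\mathbf{1}\bigr\|$, and every other eigenvalue lies strictly inside $(-1,1)$. Hence $(P^t)^s \to v^t (v^t)^{\top}$ as $s\to\infty$.

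Then I would convert this matrix limit into a node-level statement. With $V^t := \sum_i \hat{D}^t_{ii}$, the $(i,j)$ entry of $v^t (v^t)^{\top}$ equals $\sqrt{\hat{D}^t_{ii}\hat{D}^t_{jj}}/V^t$, so
\begin{equation*}
\lim_{s\to\infty} \mathbf{z}_i^t\,(\hat{D}^t_{ii})^{-1/2} = \frac{1}{V^t}\sum_j \sqrt{\hat{D}^t_{jj}}\, x_j^t =: \alpha^t,
\end{equation*}
a vector that is the same for every node $i$. Plugging this into the definitions of $\mathbf{p}^t$ in Eq.~\eqref{train_prototype} and of $\mathbf{p}^{\text{test}}$ in Eq.~\eqref{test_prototype}, both averages collapse onto $\alpha^t$ in the limit, and since $\mathcal{G}^{\text{test}}$ is drawn from task $t$ (same adjacency and node attributes), the two limits coincide, giving $d(\mathbf{p}^{\text{test}},\mathbf{p}^t) \to 0$.

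The main obstacle is the precise reading of \emph{not isolated}. If the task graph is disconnected, $(P^t)^s$ only converges to a block-rank-one operator and the per-node limit depends on the component containing $i$; the conclusion then requires the additional (natural but unstated) assumption that $\mathcal{V}^t_{\text{train}}$ and $\mathcal{V}^{\text{test}}$ meet each connected component in matching proportions, or a componentwise reformulation of the argument. Making this hypothesis explicit, or otherwise justifying why \emph{non-isolated} alone suffices, is the delicate step; the remainder of the proof is the standard spectral-gap calculation sketched above.
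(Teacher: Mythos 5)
Your proof is correct and takes essentially the same route as the paper: rewrite the smoothing operator as the symmetric-normalized adjacency $P^t=(\hat{D}^t)^{-1/2}\hat{A}^t(\hat{D}^t)^{-1/2}$, use the spectral gap (with self-loops ruling out eigenvalue $-1$) to get $(P^t)^s \to v^t (v^t)^\top$ where $v^t \propto (\hat{D}^t)^{1/2}\mathbf{1}$, and note that the $(\hat{D}^t_{ii})^{-1/2}$ rescaling in Eqs.~\eqref{train_prototype}--\eqref{test_prototype} makes the per-node limit a single vector independent of $i$, so averaging over any node subset collapses to the same prototype. Your caveat about connectivity is well placed and in fact applies equally to the paper's own argument, which writes $-1 < \lambda_1 < \cdots < \lambda_N = 1$ and thereby tacitly assumes the top eigenvalue is simple; strictly, the hypothesis ``not isolated'' should be strengthened to the task graph being connected (after adding self-loops), or one must add the componentwise-matching assumption you identify, and the paper's appendix remark about augmenting isolated nodes is a practical patch for exactly this gap rather than a resolution of it.
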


\begin{restatable}{theorem}{restatwo}\label{theorem2}
Suppose the test graph comes from task $t$, and let $\mathbf{e}$ and $\epsilon$ be the differences in node degrees and node attributes between two different tasks $t$ and $j$ respectively, which are defined by $(\hat{D}^j)^{\frac{1}{2}} = (\hat{D}^t)^{\frac{1}{2}} + {\rm Diag}(\mathbf{e})$ and $X^j = X^t + \epsilon$. Then the distance between the task prototypes of task $t$ and $j$ obtained with large steps of Laplacian smoothing can be explicitly calculated as:
\begin{equation}
    d(\mathbf{p}^{\text{test}}, \mathbf{p}^j) = \|(\mathbf{e}_N^t)^T \epsilon + (\mathbf{e})^TX^j\|_2\, ,
\end{equation}
where $\mathbf{e}_N^t = (\hat{D}^{t})^{\frac{1}{2}}[1, 1, \ldots, 1]^T$ is the $N$-th eigenvector of task $t$ and $(\mathbf{e}_N^t)^T$ denotes its transpose.
\end{restatable}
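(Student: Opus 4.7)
The plan is to leverage Theorem 1 and reduce the problem to a spectral calculation on the symmetric normalized adjacency matrix. Since the test graph comes from task $t$, Theorem 1 gives $\mathbf{p}^{\text{test}} \to \mathbf{p}^t$ as $s \to \infty$, so in the large-$s$ regime the quantity $d(\mathbf{p}^{\text{test}}, \mathbf{p}^j)$ equals $\|\mathbf{p}^t - \mathbf{p}^j\|_2$. It therefore suffices to compute $\mathbf{p}^t$ and $\mathbf{p}^j$ explicitly using the spectral structure of the Laplacian smoothing operator.

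Next, I would identify $M^t := I - (\hat{D}^t)^{-1/2}\hat{L}^t(\hat{D}^t)^{-1/2}$ with the symmetric normalized adjacency matrix $(\hat{D}^t)^{-1/2}\hat{A}^t(\hat{D}^t)^{-1/2}$. The non-isolation assumption together with the self-loops from $\hat{A}^t = A^t + I$ forces the spectrum to lie in $(-1, 1]$ with a simple top eigenvalue $1$ whose unit eigenvector is $\mathbf{e}_N^t / \|\mathbf{e}_N^t\|_2$, as one verifies directly from $\hat{A}^t \mathbf{1} = \hat{D}^t \mathbf{1}$. Consequently, as $s \to \infty$, the power $(M^t)^s$ converges to the rank-one projector $\mathbf{e}_N^t (\mathbf{e}_N^t)^T / \|\mathbf{e}_N^t\|_2^2$.

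Substituting this limit into $Z^t = (M^t)^s X^t$, the $i$-th row becomes $(\hat{D}^t_{ii})^{1/2} (\mathbf{e}_N^t)^T X^t / \|\mathbf{e}_N^t\|_2^2$, and the factor $(\hat{D}^t_{ii})^{-1/2}$ in Eq.~\eqref{train_prototype} cancels the dependence on $i$, so all averaged terms coincide and $\mathbf{p}^t$ reduces (up to the common normalization $\|\mathbf{e}_N^t\|_2^{-2}$) to $(\mathbf{e}_N^t)^T X^t$; the same computation applied to task $j$ yields $(\mathbf{e}_N^j)^T X^j$. Plugging in the hypothesized relations $\mathbf{e}_N^j = \mathbf{e}_N^t + \mathbf{e}$ (obtained by multiplying the identity $(\hat{D}^j)^{1/2} = (\hat{D}^t)^{1/2} + \mathrm{Diag}(\mathbf{e})$ by $\mathbf{1}$) and $X^j = X^t + \epsilon$, direct expansion gives
\begin{equation*}
(\mathbf{e}_N^j)^T X^j - (\mathbf{e}_N^t)^T X^t = (\mathbf{e}_N^t)^T \epsilon + \mathbf{e}^T X^t + \mathbf{e}^T \epsilon = (\mathbf{e}_N^t)^T \epsilon + \mathbf{e}^T X^j,
\end{equation*}
and taking Euclidean norms produces the claimed identity.

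The main obstacle will be dealing cleanly with the scalar normalization $\|\mathbf{e}_N^t\|_2^{-2}$ vs.\ $\|\mathbf{e}_N^j\|_2^{-2}$ that appears in the spectral projector but is implicit in the statement of the theorem; a careful write-up should specify the convention under which these factors are absorbed (e.g.\ treating the prototype as $(\mathbf{e}_N^t)^T X^t$ in the limit) so that $d(\mathbf{p}^t,\mathbf{p}^j)$ matches the right-hand side exactly rather than only up to this scalar. A secondary technical issue is ruling out a $-1$ eigenvalue of $M^t$, i.e.\ verifying that the (augmented) graph is non-bipartite, so that $(M^t)^s$ has a well-defined rank-one limit rather than oscillating; this is immediate from the presence of self-loops in $\hat{A}^t$.
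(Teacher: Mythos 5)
Your proposal is correct and follows essentially the same route as the paper: reduce $d(\mathbf{p}^{\text{test}},\mathbf{p}^j)$ to $\|\mathbf{p}^t - \mathbf{p}^j\|_2$ via Theorem~\ref{theorem}, express the large-$s$ prototype of each task as $(\mathbf{e}_N)^T X$ from the spectral limit of the smoothing operator, substitute $\mathbf{e}_N^j = \mathbf{e}_N^t + \mathbf{e}$ (from multiplying the degree identity by $\mathbf{1}$) and $X^j = X^t + \epsilon$, and expand to get $(\mathbf{e}_N^t)^T\epsilon + \mathbf{e}^T X^j$. That is exactly the paper's chain of equalities $\|\mathbf{p}^t-\mathbf{p}^j\|_2 = \|(\mathbf{e}_N^t)^T X^t - (\mathbf{e}_N^t+\mathbf{e})^T(X^t+\epsilon)\|_2 = \|(\mathbf{e}_N^t)^T\epsilon + \mathbf{e}^T X^j\|_2$.

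The normalization concern you raise in your final paragraph is a real one, and it is in fact glossed over in the paper's own proof of Theorem~\ref{theorem}: the step $\hat{X}^t = [\mathbf{e}_1,\ldots,\mathbf{e}_N]^{-1}X^t = [\mathbf{e}_1,\ldots,\mathbf{e}_N]^T X^t$ implicitly requires an orthonormal eigenbasis, yet $\mathbf{e}_N^t = (\hat{D}^t)^{1/2}\mathbf{1}$ is unnormalized. A careful version of the rank-one-projector limit, as you wrote it, gives $\mathbf{p}^t = \|\mathbf{e}_N^t\|_2^{-2}(\mathbf{e}_N^t)^T X^t$ and $\mathbf{p}^j = \|\mathbf{e}_N^j\|_2^{-2}(\mathbf{e}_N^j)^T X^j$, and since $\|\mathbf{e}_N^t\|_2^2 = \sum_i \hat{D}^t_{ii}$ generally differs from $\|\mathbf{e}_N^j\|_2^2$, the stated identity holds exactly only under the paper's implicit convention that drops these scalar factors (equivalently, defining the prototype as $(\mathbf{e}_N)^T X$ directly). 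You identified both this and the non-bipartiteness-from-self-loops point needed for the limit to exist; both are correct, and the latter matches the paper's handling in Theorem~\ref{theorem}. No gap in your argument beyond the convention issue you already flagged.
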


\begin{wrapfigure}[13]{RT}{0.55\textwidth}
  \centering
  \includegraphics[width=0.55\textwidth]{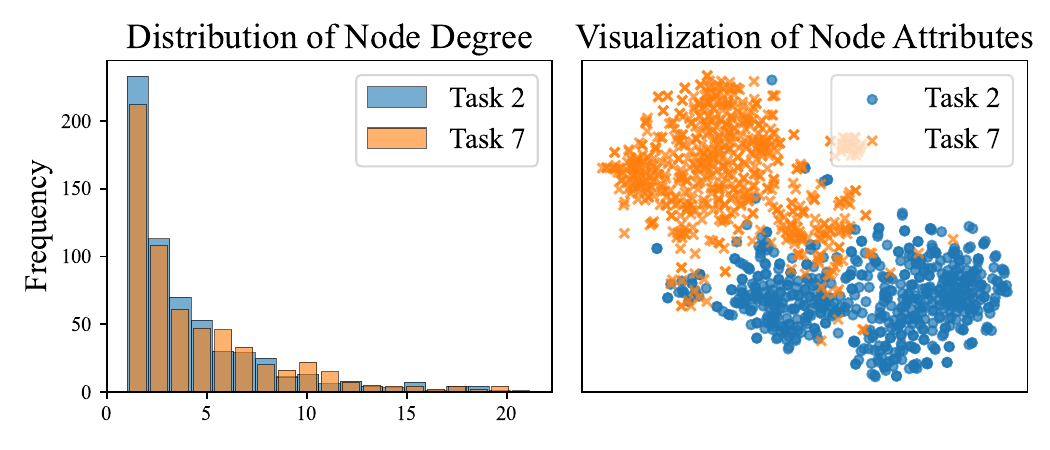}
  \caption{The differences between two graphs in structure and node attributes.}
  \label{diff_a_x}
\end{wrapfigure}

The two theorems indicate that i) if the test graph belongs to task $t$, with a large $s$, the distance between $\mathbf{p}^{\text{test}}$ and $\mathbf{p}^t$ would become zero with the proposed Laplacian smoothing and prototype construction method (Theorem \ref{theorem}); and ii) for graphs from different tasks, since they contain different set of classes, they have large differences in graph structure and node attributes, which can lead to a large distance between task prototypes $\mathbf{p}^{t}$ and $\mathbf{p}^j$ (Theorem \ref{theorem2}), thereby having the following inequality hold if $\mathcal{G}^{\text{test}}$ comes from task $t$:
\begin{equation}
    d(\mathbf{p}^{\text{test}}, \mathbf{p}^t) < d(\mathbf{p}^{\text{test}}, \mathbf{p}^j)\, . \ \ \forall j \neq t \,.
\end{equation}

Without loss of generality, we empirically investigate the differences between two randomly chosen graph tasks of the CorFull dataset in Fig.~\ref{diff_a_x}. We can see that the two graphs of the tasks have a rather large difference in both graph structure and node attributes. The larger the differences in $\mathbf{e}$ and $\epsilon$ of the two graphs, the larger the gap is between $d(\mathbf{p}^{\text{test}}, \mathbf{p}^t)$ and $ d(\mathbf{p}^{\text{test}}, \mathbf{p}^j)$. As a result, the task of the test graph can be predicted accurately with Eq.~\eqref{testid}. In the experimental section, we empirically evaluate the proposed task ID prediction and report its accuracy on different datasets.

\subsection{Graph Prompt Learning for GCIL}
Instead of utilizing regularization or replaying memory as in existing GCL methods, TPP aims to learn a task-specific prompt for each graph task. The information of each graph task can be explicitly modeled and stored in a separate task-specific graph prompt, with the GNN backbone being frozen. This effectively avoids the forgetting of knowledge of any previous tasks and the interference between tasks. To this end, the graph prompt in TPP is designed as a set of learnable tokens that can be incorporated into the feature space of the graph data for each task. Specifically, for a task $t$, the graph prompt can be represented as $\Phi^t = [\phi_1^t, \ldots, \phi_k^t]^T \in \mathbb{R}^{k\times F}$ where $k$ is the number of vector-based tokens $\phi^i$. For each node in $\mathcal{G}^t$, the node attribute is augmented by the weighted combination of these tokens, with the weights obtained from $k$ learnable linear projections:
\begin{equation}\label{gp}
    \bar{\mathbf{x}}_i^t = \mathbf{x}_i^t + \sum_{j}^k \alpha_j \phi_j^t\, , \ \ \alpha_j = \frac{e^{(\mathbf{w}_j)^T\mathbf{x}_i^t}}{\sum_{l}^k e^{(\mathbf{w}_l)^T\mathbf{x}_i^t}}\, ,
\end{equation}
where $\alpha_j$ denotes the importance score of the token $\phi^j$ in the prompt and $\mathbf{w}_j$ is a learnable projection. For convenience, we denote the graph modified by the graph prompt as  $\bar{\mathcal{G}}^t = (A^t, X^t+\Phi^t)$. Then, $\bar{\mathcal{G}}^t$ is fed into a frozen pre-trained GNN model $f(\cdot)$ to obtain the embeddings for classification. In TPP, we employ a single-layer MLP as the classifier attached to the GNN, denoting $\varphi^t$ for task $t$. The node classification at task $t$ can be formulated as:
\begin{equation}
    \hat{Y}^t = \varphi^t(f(A^t, X^t + \Phi^t)) .
\end{equation}
Therefore, the graph prompt and the MLP-based classification head are optimized by minimizing a node classification loss:
\begin{equation}\label{gp_learning}
    \min_{\Phi^t, \varphi^t} \frac{1}{|\mathcal{V}_{\text{train}}^t|} \sum_{i \in \mathcal{V}_{\text{train}}^t} \ell_{\text{CE}}(\hat{y}_i^t, y^t_i) \, ,
\end{equation}
where $\mathcal{V}_{\text{train}}^t$ is the train set of $\mathcal{G}^t$, $y^t_i$ is the label of node $i$, $\hat{y}^t_i \in \hat{Y}^t$ is the predicted label, and $\ell_{\text{CE}(\cdot)}$ is a cross-entropy loss. By minimizing Eq.~\eqref{gp}, the graph prompt and the classifier are learned to leverage the generic, cross-task knowledge in the frozen GNN $f(\cdot)$ for the task $t$. Meanwhile, $\Phi^t$ and $\varphi^t$ learn specific knowledge for the task $t$. This essentially results in a separate classification model for each task, and no data replay is required for all tasks. As a result, TPP is fully free of catastrophic forgetting for GCIL. An alternative approach to overcoming the forgetting is to learn a separate GNN model for each task. However, this would introduce heavy burdens on optimization and storage with the increasing number of tasks. By contrast, the proposed graph prompting only introduces minimal parameters for each task as the prompts are very small.

\subsection{Training and Inference in TPP}
\noindent\textbf{Training.} 
The training of TPP can be divided into two parts. First, for each task $\mathcal{G}^t$, the prototypical embedding $\mathbf{p}^t$ is generated based on Laplacian smoothing and stored in $\mathcal{P}$ for task ID prediction. Then, the information of $\mathcal{G}^t$ is explicitly modeled and stored with the proposed graph prompt learning, \ie, Eq.~\eqref{gp_learning}. For the GNN backbone $f(\cdot)$ in graph prompt learning, we propose to learn it based on the first task $\mathcal{G}^1 = (A^1, X^1)$ via graph contrastive learning due to its ability to obtain transferable models \cite{you2020graph,zhu2020deep} across graphs (see Appendix \ref{sec:backbone}). Despite being only learned on $\mathcal{G}^1$, $f(\cdot)$ can effectively adapt to all subsequent tasks with graph prompts. Overall, after learning all tasks in $\{\mathcal{G}^1, \ldots, \mathcal{G}^T\}$, the task profiles and task-specific information are explicitly modeled in $\mathcal{P} = \{\mathbf{p}^1,\ldots,\mathbf{p}^T\}$, $\{\Phi^1, \ldots, \Phi^T\}$ and $\{\varphi^1, \ldots, \varphi^T\}$.

\noindent\textbf{Inference.} Given the test graph $\mathcal{G}^{\text{test}}$, the task prototype $\mathbf{p}^{\text{test}}$ is constructed with Eq.~\eqref{test_prototype} and then used to obtain the task ID $t^{\text{test}}$ by querying $\mathcal{P} = \{\mathbf{p}^1,\ldots,\mathbf{p}^T\}$, \ie, via  Eq.~\eqref{testid}. Finally, the test graph $\mathcal{G}^{\text{test}}$ is augmented with the corresponding graph prompt $\Phi^{t^{\text{test}}}$ and fed into the GNN and the classification head constructed with $f(\cdot)$ and $\varphi^{t^{\text{test}}}$ respectively to get the node classification results. 
Formally, the inference can be formulated as:  
\begin{equation}
\left\{
\begin{aligned}
&t^{\text{test}} = \arg \min (d(\mathbf{p}^{\text{test}}, \mathbf{p}^1), \ldots, d(\mathbf{p}^{\text{test}}, \mathbf{p}^T)))\, ,\\
&Y^{\text{test}} = \varphi^{t^{\text{test}}}(f(A^{\text{test}}, X^{\text{test}} + \Phi^{t^{\text{test}}}))\, .\\
\end{aligned}
\right.
\end{equation}

The algorithms of the training and inference of TPP are provided in Appendix~\ref{algo}.

\section{Experiments}
\noindent\textbf{Datasets.}
Following the GCL performance benchmark \cite{zhang2022cglb}, four large public graph datasets are employed, including CoraFull \cite{mccallum2000automating}, Arxiv \cite{hu2020open}, Reddit \cite{hamilton2017representation} and Products \cite{hu2020open}.  Specifically, CoraFull and Arxiv are citation networks, Reddit is constructed from Reddit posts, and Products is a co-purchasing network from Amazon. 
For all datasets, each task is set to contain only two classes \cite{zhang2022cglb}. Besides, for each class, the proportions of training, validation, and testing are set to be 0.6, 0.2, and 0.2 respectively.

\noindent\textbf{Competing Methods.}
Two categories of state-of-the-art (SOTA) continual learning methods are employed for comparison: (1) general CIL methods: EWC \cite{kirkpatrick2017overcoming}, LwF \cite{li2017learning}, GEM \cite{lopez2017gradient} and MAS \cite{aljundi2018memory}; (2) graph CIL methods: ERGNN \cite{zhou2021overcoming}, TWP \cite{liu2021overcoming}, SSM \cite{zhang2022sparsified}, SEM \cite{zhang2023ricci}, CaT \cite{liu2023cat} and DeLoMe \cite{niu2024graph}. There are limited methods on task ID prediction for CIL, \eg, \cite{kim2022theoretical,lin2024class}, but they are not suited for graph data. To compare with this type of methods, we adapt the OOD detection-based CIL methods \cite{kim2022theoretical,lin2024class} for GCIL (named OODCIL) (Details are in Appendix~\ref{oodbaseline}). In addition, we include two baseline methods: \textbf{Fine-Tune} and \textbf{Joint}. The Fine-Tune method is a baseline that simply fine-tunes the learned model from previous tasks without continual learning techniques, while the Joint method is an oracle model that can see all graphs at all times and performs GCL on the full graphs of all tasks. We also report the results of an enhanced \textbf{Oracle Model} that is an enhanced version of the Joint method with access to the task ID of every test sample during inference.

\noindent\textbf{Implementation Details.}
The proposed method is implemented under the GCL library \cite{zhang2022cglb}. TPP adopts a two-layer SGC \cite{wu2019simplifying} as the GNN backbone model with the same hyper-parameters as \cite{zhang2023ricci}. For task prototype construction, the number of steps $s$ in Laplacian smoothing is set to 3 by default. The number of tokens in each graph prompt, $k$, is also set to 3 across the four datasets. For each dataset, we report the average performance with standard deviations after 5 independent runs with different random seeds.

\noindent\textbf{Evaluation Metrics.}
To evaluate the performance of continual learning methods, two commonly used metrics, average accuracy (AA) and average forgetting (AF) after all tasks have been learned, are adopted in our experiments. A larger AA/AF indicates better performance. An AF value of zero indicates a perfect performance involving no knowledge forgetting (\ie, forget-free). 
The detailed definitions of AA and AF are given in Appendix~\ref{metrics}.

\begin{table}[!t]
\caption{Results (mean$\pm$std) under the GCIL setting on four large datasets. 
The best performance on each dataset is boldfaced. ``$\uparrow$'' denotes the higher value represents better performance. Oracle Model can get access to the data of all tasks and task IDs, \ie, it obtains the upper bound performance. ``\checkmark'' in Data Replay indicates the use of data replay in the model, and $\times$ denotes no data replay involved.}
\centering
\resizebox{0.99\textwidth}{!}{
\begin{tabular}{cc|cc|cc|cc|cc}
\hline
\multirow{2}{*}{Methods} & \multirow{2}{*}{\begin{tabular}[c]{@{}l@{}}Data\\ Replay\end{tabular}} & \multicolumn{2}{c|}{CoraFull} & \multicolumn{2}{c|}{Arixv}  & \multicolumn{2}{c|}{Reddit} & \multicolumn{2}{c}{Products} \\ 
\cline{3-10} 
& & AA/\%$\uparrow$  & AF/\%$\uparrow$  & AA/\%$\uparrow$  & AF/\%$\uparrow$ & AA/\%$\uparrow$  & AF/\%$\uparrow$ & AA/\%$\uparrow$  & AF/\%$\uparrow$ \\ 
                    
\hline \hline
Fine-tune  & $\times$ &3.5$\pm$0.5  &-95.2$\pm$0.5  &4.9$\pm$0.0  &-89.7$\pm$0.4  &5.9$\pm$1.2    &-97.9$\pm$3.3  &7.6$\pm$0.7  &-88.7$\pm$0.8 \\
Joint & $\times$ & 81.2$\pm$0.4 &-              &51.3$\pm$0.5 &-               &97.1$\pm$0.1  &-               &71.5$\pm$0.1 &-\\
\hline
EWC     & $\times$    &52.6$\pm$8.2 &-38.5$\pm$12.1  &8.5$\pm$1.0  &-69.5$\pm$8.0  &10.3$\pm$11.6  &-33.2$\pm$26.1  &23.8$\pm$3.8 &-21.7$\pm$7.5 \\
MAS     & $\times$    &6.5$\pm$1.5  &-92.3$\pm$1.5  &4.8$\pm$0.4  &-72.2$\pm$4.1  &9.2$\pm$14.5   &-23.1$\pm$28.2  &16.7$\pm$4.8 &-57.0$\pm$31.9 \\
GEM     & $\times$    &8.4$\pm$1.1  &-88.4$\pm$1.4  &4.9$\pm$0.0  &-89.8$\pm$0.3  &11.5$\pm$5.5   &-92.4$\pm$5.9  &4.5$\pm$1.3  &-94.7$\pm$0.4 \\
LwF    & $\times$     &33.4$\pm$1.6 &-59.6$\pm$2.2  &9.9$\pm$12.1  &-43.6$\pm$11.9  &86.6$\pm$1.1  &-9.2$\pm$1.1   &48.2$\pm$1.6 &-18.6$\pm$1.6 \\
TWP    & $\times$     &62.6$\pm$2.2 &-30.6$\pm$4.3  &6.7$\pm$1.5  &-50.6$\pm$13.2  &8.0$\pm$5.2    &-18.8$\pm$9.0  &14.1$\pm$4.0 &-11.4$\pm$2.0 \\
ERGNN  & \checkmark    &34.5$\pm$4.4 &-61.6$\pm$4.3  &21.5$\pm$5.4 &-70.0$\pm$5.5  &82.7$\pm$0.4  &-17.3$\pm$0.4  &48.3$\pm$1.2 &-45.7$\pm$1.3 \\
SSM-uniform & \checkmark &73.0$\pm$0.3 &-14.8$\pm$0.5  &47.1$\pm$0.5 &-11.7$\pm$1.5  &94.3$\pm$0.1  &-1.4$\pm$0.1   &62.0$\pm$1.6 &-9.9$\pm$1.3 \\
SSM-degree & \checkmark &75.4$\pm$0.1 &-9.7$\pm$0.0 &48.3$\pm$0.5 &-10.7$\pm$0.3  &94.4$\pm$0.0  &-1.3$\pm$0.0   &63.3$\pm$0.1 &-9.6$\pm$0.3 \\
SEM-curvature & \checkmark &77.7$\pm$0.8 &-10.0$\pm$1.2  &49.9$\pm$0.6  &-8.4$\pm$1.3   &96.3$\pm$0.1   &-0.6$\pm$0.1 &65.1$\pm$1.0  &{-9.5$\pm$0.8} \\
CaT & \checkmark &80.4$\pm$0.5&-5.3$\pm$0.4&48.2$\pm$0.4&-12.6$\pm$0.7&97.3$\pm$0.1&-0.4$\pm$0.0&70.3$\pm$0.9&-4.5$\pm$0.8\\
DeLoMe & \checkmark &81.0$\pm$0.2 &-3.3$\pm$0.3 &50.6$\pm$0.3   &5.1$\pm$0.4  &97.4$\pm$0.1&-0.1$\pm$0.1 &67.5$\pm$0.7 &-17.3$\pm$0.3\\ 
OODCIL & \checkmark &71.3$\pm$0.5&-1.1$\pm$0.1&19.3$\pm$1.4&-1.0$\pm$0.4&79.3$\pm$0.8&-0.1$\pm$0.0&41.6$\pm$0.9&-1.6$\pm$0.4\\
TPP (Ours) & $\times$ & \textbf{93.4$\pm$0.4} & \textbf{0.0$\pm$0.0} & \textbf{85.4$\pm$0.1} & \textbf{0.0$\pm$0.0} &\textbf{99.5$\pm$0.0} & \textbf{0.0$\pm$0.0} & \textbf{94.0$\pm$0.5} & \textbf{0.0$\pm$0.0} \\\hline
\rowcolor{lightgray} Oracle Model  & $\times$   &95.5$\pm$0.2 & -            &90.3$\pm$0.4  & -            & 99.5$\pm$0.0  & -             &95.3$\pm$0.8  & - \\ 
\hline
\end{tabular}
}
\label{cil}
\end{table}

\subsection{Main Results}
\noindent\textbf{Comparison to SOTA Methods.}
The results of TPP and its competing methods under the GCIL setting are shown in Table~\ref{cil}. From the table, we can draw the following key observations. (1) As demonstrated by the results of Fine-Tune, directly fine-tuning the learned model from previous tasks on the current task data leads to serious performance degradation because the knowledge of previous tasks could be easily overwritten by the new tasks. (2) CIL methods proposed for Euclidean data generally do not achieve satisfactory performance for GCIL, which verifies the fact that the unique graph properties should be taken into consideration for GCIL. (3) Replay-based methods generally achieve much better performance than the other baselines, showing the effectiveness of using an external memory buffer to overcome catastrophic forgetting. However, all of them still suffer from forgetting, in addition to the inter-task separation issue. (4) The performance of OODCIL demonstrates that despite achieving impressive AF performance, current OOD detection-based CIL methods are not effective for GCIL due to the overlook of graph properties in its OOD detector and classification model. (5) Different from the baselines that involve the forgetting problem to varying extents, the proposed method TPP is a fully forget-free GCIL approach, achieving an AF value of zero across all four datasets. TPP is also consistently the best performer in AA, outperforming the best-competing method by over 18\% in AA averaged over the four datasets. This superiority is attributed to the highly accurate task ID prediction module in TPP and its effective task-specific graph prompt learning (see Sec. \ref{subsec:ablation}). (6) Our method lifts the SOTA AA performance by a large margin and even significantly outperforms the oracle baseline Joint in all cases. This is because although the Joint method can mitigate catastrophic forgetting due to its access to the data of all graphs, it is still challenged by the inter-task class separation issue since it is not given task ID during inference. TPP effectively tackles both catastrophic forgetting and inter-task class separation issues, thus achieving significantly better AA than Joint and very comparable AA to the Oracle Model.

\noindent\textbf{Enabling Existing GCIL Methods with Our Task ID Prediction Module.}
Existing GCIL Methods often suffer from a severe inter-task class separation issue. Our task ID prediction is devised as a module to tackle this issue. To show its effectiveness as an individual plug-and-play module, we evaluate its performance when combined with existing GCIL methods. Our task ID prediction method does not change the training process of existing GCIL models. It is directly incorporated into them at the inference stage only, \ie, our task ID predictor produces a task ID for each test sample and the existing GCIL models then perform intra-task classification in the predicted task. Without loss of generality, a parameter regularization-based method (TWP \cite{liu2021overcoming}) and a memory-replay method (DeLoMe \cite{niu2024graph}) are used as exemplars for this experiment. The results are shown in Table~\ref{addtp}. We can see that both AA and AF performance of the two existing GCIL models are largely enhanced by the proposed task identification module. For relatively weak GCIL models like TWP, the improvement is much more substantial than the strong ones like DeLoMe. The reason is that being able to predict the task ID accurately enables the subsequent CIL classification within the original task space of the test graph, not the space containing all the learned classes, significantly simplifying (reducing) the classification space. Essentially, such a task ID prediction converts the GCIL task into the GTIL task, so that much better AA and AF results are expected.

\begin{table}[!t]
\caption{AA and AF results of enabling existing GCIL methods with our task ID prediction (TP).}
\centering
\resizebox{0.99\textwidth}{!}{
\begin{tabular}{c|cc|cc|cc|cc}
\hline
\multirow{2}{*}{Methods} & \multicolumn{2}{c|}{CoraFull} & \multicolumn{2}{c|}{Arixv}  & \multicolumn{2}{c|}{Reddit} & \multicolumn{2}{c}{Products}\\ 
\cline{2-9} 
                    & AA/\%$\uparrow$  & AF/\%$\uparrow$  & AA/\%$\uparrow$  & AF/\%$\uparrow$ & AA/\%$\uparrow$  & AF/\%$\uparrow$ & AA/\%$\uparrow$  & AF/\%$\uparrow$ \\
\hline 
TWP         &62.6$\pm$2.2 &-30.6$\pm$4.3  &6.7$\pm$1.5  &-50.6$\pm$13.2  &8.0$\pm$5.2    &-18.8$\pm$9.0  &14.1$\pm$4.0 &-11.4$\pm$2.0 \\
\quad\quad+TP    &94.3$\pm$0.9 &-1.6$\pm$0.4  &89.4$\pm$0.4  &0.0$\pm$0.3 &78.0$\pm$18.5  &-0.2$\pm$0.4   &81.8$\pm$3.3  &-0.3$\pm$0.8\\
\hline
DeLoMe &81.0$\pm$0.2 &-3.3$\pm$0.3 &50.6$\pm$0.3   &5.1$\pm$0.4  &97.4$\pm$0.1&-0.1$\pm$0.1 &67.5$\pm$0.7 &-17.3$\pm$0.3\\
\quad\quad+TP &95.4$\pm$0.1 &2.0$\pm$0.6 &90.4$\pm$0.3 &-1.1$\pm$0.2  &99.4$\pm$0.0  &-0.1$\pm$0.0  &94.8$\pm$0.1 &-2.2$\pm$0.2   \\
\hline
\end{tabular}}
\label{addtp}
\end{table}

\subsection{Ablation Study}\label{subsec:ablation}

\noindent\textbf{Importance of Task ID Prediction.}
In GCIL, the test samples are required to be classified into one of all the learned classes. To evaluate the importance of task ID prediction that helps confine the classification space of the test samples to the classes of the predicted task, we conduct the experiments of TPP without the proposed task profiling approach and report the results in Table~\ref{comp}. Specifically, we obtain the class probabilities of the test sample for all tasks and prompts, and the class with the highest probability is treated as the class for the test sample. As shown in the table, this TPP variant can barely work on all four datasets. This is mainly due to that the graph prompts are learned task by task during training. Without the guidance of task identification, the non-normalized within-task prediction probabilities obtained with corresponding prompts pose great challenges for classifying the test samples into the correct classes.

\noindent\textbf{Importance of Graph Prompting.}
Besides the task ID prediction, we also evaluate the importance of graph prompting. There are two modules for each task in the proposed graph promoting, \ie, graph prompt $\Phi^t$ and classification head $\varphi^t$. The results with and without each module are shown in Table~\ref{comp}. We can see that the TPP variant without both $\Phi^t$ and $\varphi^t$,  which is equivalent to the direct use of the GNN backbone learned only from the first task for all subsequent tasks, achieves the worst AA performance, though it is free of forgetting since there is no model updating. By incorporating either $\Phi^t$ or $\varphi^t$, the performance can be largely improved, which can be attributed to the transferable knowledge in the pre-trained GNN $f$ and the effective adaptation of the prompts or the classifier to the subsequent tasks. Note that the variant with only $\Phi^t$ obtains much better performance than that with only $\varphi^t$, demonstrating that the learned graph prompts can more effectively model the task-specific information and bridge the gap between the first task and subsequent tasks. The results also explain the visualization of node embeddings with and without the graph prompt in Fig.~\ref{motivation}, where the graph prompt can largely enhance the intra-task separation. By integrating all the components, the full TPP model achieves the best performance across all datasets.

\begin{table}[!t]
\centering
\caption{Results of TPP and its variants on ablating task ID prediction and graph prompting modules.}
\resizebox{0.99\textwidth}{!}{
\begin{tabular}{c|cc|cc|cc|cc|cc}
\hline
\multirow{2}{*}{Task ID Prediction} &\multicolumn{2}{c|}{Graph Prompting} & \multicolumn{2}{c|}{CoraFull} & \multicolumn{2}{c|}{Arixv}  & \multicolumn{2}{c|}{Reddit} & \multicolumn{2}{c}{Products}\\ 
\cline{2-11} 
&Prompt&Classification Head& AA/\%$\uparrow$  & AF/\%$\uparrow$  & AA/\%$\uparrow$  & AF/\%$\uparrow$ & AA/\%$\uparrow$  & AF/\%$\uparrow$ & AA/\%$\uparrow$  & AF/\%$\uparrow$ \\ 
\hline
$\times$ & \checkmark & \checkmark  & 2.0  &-5.5 & 3.0  &-10.9& 2.8  &-16.8& 2.7  &-8.2\\
\checkmark &$\times$ & $\times$     & 50.7 & 0.0 & 54.0 & 0.0 & 47.4 & 0.0 & 51.8 & 0.0\\
\checkmark &$\times$ & \checkmark   & 73.8 & 0.0 & 76.3 & 0.0 & 98.6 & 0.0 & 90.0 & 0.0\\
\checkmark &\checkmark & $\times$   & 92.8 & 0.0 & 82.9 & 0.0 & 99.0 & 0.0 & 90.7 & 0.0\\
\checkmark &\checkmark & \checkmark & 93.4 & 0.0 & 85.4 & 0.0 & 99.5 & 0.0 & 94.0 & 0.0\\
\hline
\end{tabular}}
\label{comp}
\vspace{-0.3cm}
\end{table}

\noindent\textbf{Sensitivity w.r.t the Size of Graph Prompts.}
We evaluate the sensitivity of the proposed method w.r.t the size of the graph prompt, \ie, the number of tokens per prompt. We vary $k$ in the range of $[1,6]$ to verify the sensitivity and report the results in Fig.~\ref{sentitity}. It is clear that the performance of TPP increases quickly from $k=1$ to $k=2$ and remains stable when $k>2$, demonstrating that TPP can be effectively adapted to different tasks with a small size of prompt for each task. This also demonstrates the transferability of the learned GNN backbone for all tasks.

\begin{wrapfigure}[16]{RT}{0.65\textwidth}
\centering
\subfigure[]{\label{sentitity}\includegraphics[width=0.32\textwidth]{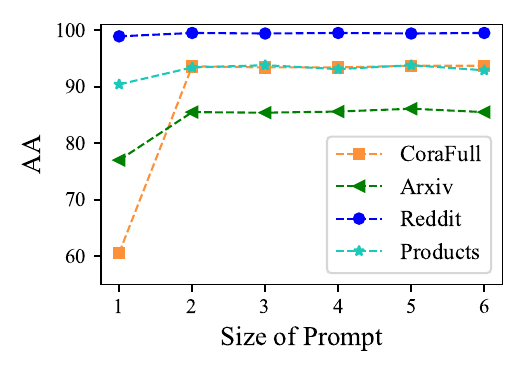}} 
\subfigure[]{\label{acc_tp}\includegraphics[width=0.32\textwidth]{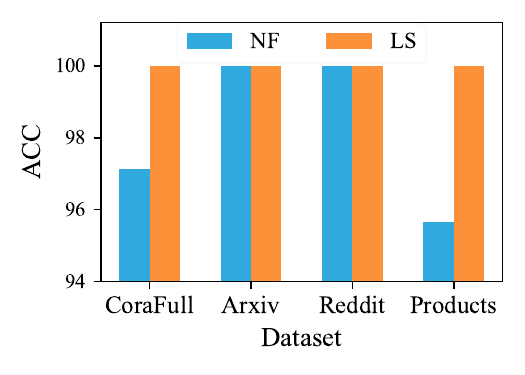}} 
\caption{\textbf{(a)} The AA results of TPP w.r.t. the size of the graph prompts. \textbf{(b)} Task ID prediction accuracy on all four datasets using Laplacian smoothing (LS) and its variant based on solely node features (NF).}
\end{wrapfigure}

\noindent\textbf{Accuracy of Task ID Prediction.}
We further evaluate the accuracy of the proposed task ID prediction method. We compare it to a variant of our method that constructs the task prototype based on the node attributes without considering the graph structure. In this variant, each task prototype is constructed by simply averaging the attributes of training nodes of each task. The task prototype of a test graph is constructed with the test nodes in the same way. The inference process remains the same as the proposed method. The results of these two methods are shown in Fig.~\ref{acc_tp}. We see that the task identification sorely based on node attributes achieves high accuracy for all datasets and even predicts all of the tasks correctly for Arxiv and Reddit. This is largely attributed to the discriminative node attributes between tasks in these datasets, as demonstrated in Fig.~\ref{diff_a_x}. However, it fails to discriminate tasks with similar node attributes. By contrast, the proposed method based on Laplacian smoothing can handle all the cases, resulting in perfect task ID prediction for all tasks and datasets, which builds a strong foundation for superior GCIL performance of TPP.

\noindent\textbf{Performance of TPP with different task formulations.}
For the task formulation, we set each task to contain two different classes of nodes and follow the commonly used task formulation strategy in \cite{zhang2022cglb,niu2024graph} to have fair comparisons with the baselines. Specifically, given a graph dataset with several classes, we split these classes into different tasks in numerically ascending order of the original classes, i.e., classes 0 and 1 form the first task, classes 2 and 3 form the second task, and so on. To evaluate the performance of TPP with different task formulations, we further perform the class splitting in two other manners, including numerically descending and random ordering of the two classes per task. In Table~\ref{diff_formulation}, we report the average performance of the TPP and the Oracle Model with different task formulations.

\begin{table}[!t]
\caption{Results of average performance of TPP and Oracle Model on datasets with various task formulations.}
\centering
\resizebox{0.75\textwidth}{!}{
\begin{tabular}{c|ccccc}
\hline
Task Formulation&	Method&	CoraFull&	Arxiv&	Reddit&	Prodcuts\\
\hline
Ascending Order &TPP&	93.4&	85.4&	99.5&	94.0\\
Ascending Order &Oracle Model&	95.5&	90.3&	99.5&	95.3\\
\hline
Descending Order&TPP&	94.5&	85.9&	99.4&	93.9\\
Descending Order&Oracle Model&	96.1&	91.6&	99.5&	94.7\\
\hline
Random Order&TPP&	94.8&	86.9&	99.5&	85.9\\
Random Order&Oracle Model&	95.3&	91.3&	99.7&	86.8\\
\hline
\end{tabular}}
\label{diff_formulation}
\end{table}

From the table, we observe that the proposed TPP method can still achieve comparable performance to the Oracle Model with different task formulations, highlighting the robustness and effectiveness of TPP w.r.t. the formulation of individual tasks. Note that the performances of TPP and the Oracle Model both drop on Products with random task formulation. This is attributed to the heavily imbalanced class distribution of Products and the performance is evaluated by the balanced classification accuracy. Specifically, for Products, some classes contain hundreds of thousands of nodes while the number of nodes in some classes is less than 100. The ascending and descending task formulations have a relatively balanced class distribution for each task. However, the random task formulation results in some tasks with heavily imbalanced class distribution. To address this problem, debiased learning is required and we leave it for future research. Please also note that TPP learns the GNN backbone only on the first task and is frozen during the subsequent prompt learning. Different task formulations result in the GNN backbone being learned with different first tasks. The above results also reveal that the proposed graph prompting enables the learned GNN backbone to effectively adapt to all subsequent tasks despite the backbone being learned on different initial tasks.

\section{Conclusion}
This paper proposes a novel approach for GCIL via task profiling and prompting. The absence of task IDs during inference poses significant challenges for GCIL. To address this issue,
this paper proposes a novel Laplacian smoothing-based graph task profiling approach for GCIL, where each task is modeled by a task prototype based on Laplacian smoothing over the graph. We prove theoretically that the task prototypes of the same graph task are nearly the same with a large smoothing step and the prototypes of different tasks are distinct due to the differences in graph structure and node attributes, ensuring accurate task ID prediction for GCIL. To avoid catastrophic forgetting and achieve high within-task prediction, we further propose the first graph prompting method for GCIL which is learned to absorb the within-task information into the small task-specific graph prompts. This results in a memory-efficient TPP as i) no memory buffer is required for data replay due to its replay-free characteristic and ii) the graph prompting only requires the training of a single GNN once and a small number of tokens per prompt for each task. Extensive experiments show that TPP is fully forget-free and significantly outperforms the state-of-the-art baselines for GCIL.

\section*{Acknowledgments}
In this work, the participation of Chaoxi Niu and Ling Chen was supported by Australian Research Council under Grant DP210101347, while the participation of Guansong Pang was supported in part by Lee Kong Chian Fellowship. The work of Bing Liu was supported in part by four NSF grants (IIS-2229876, IIS-1910424, IIS-1838770, and CNS-2225427). 

\bibliographystyle{plain}
\bibliography{reference}

\appendix

\section{Proof of Theorems}\label{proof}
\resta*
\begin{proof}
To prove the distance between $\mathbf{p}^{\text{test}}$ and $\mathbf{p}^t$ approaches 0 with a large Laplacian smoothing step $s$, we need to illustrate that the features of nodes in $\mathcal{G}^t$ coverage to be proportional to the square root of the node degree after Laplacian smoothing and the proposed task prototype construction method transforms all node features to the same values. Note that the self-loop is added to each node in the graph, resulting in the graph being non-bipartile. Assuming the size of the graph $\mathcal{G}^t$ is $N$, the Laplacian matrix $(\hat{D}^{t})^{-\frac{1}{2}}\hat{L}^t(\hat{D}^{t})^{-\frac{1}{2}}$ has $N$ eigenvalues with different eigenvectors \cite{von2007tutorial}. Recalling the Laplacian smoothing defined in Eq.~\eqref{lp}, the eigenvalues and eigenvectors of $I - (\hat{D}^{t})^{-\frac{1}{2}}\hat{L}^t(\hat{D}^{t})^{-\frac{1}{2}}$ can be represented as $(\lambda_1, \ldots, \lambda_N)$ and $(\mathbf{e}_1, \ldots, \mathbf{e}_N)$ respectively. With the property of symmetric Laplacian matrix for the non-bipartile graph, the eigenvalues of $I - (\hat{D}^{t})^{-\frac{1}{2}}\hat{L}^t(\hat{D}^{t})^{-\frac{1}{2}}$ are all in the range of $(-1, 1]$ \cite{chung1997spectral}, \ie,
\begin{equation}
    -1<\lambda_1 < \ldots< \lambda_N = 1\, , \ \ \mathbf{e}_N = (\hat{D}^{t})^{\frac{1}{2}}[1, 1, \ldots, 1]^T \in \mathbb{R}^{N\times 1}\, .
\end{equation}
Based on the eigenvalues and the eigenvectors, the result of applying Laplacian smoothing on the node features $X^t$ after $s$ steps can be formulated as:
\begin{equation}
(I - (\hat{D}^{t})^{-\frac{1}{2}}\hat{L}^t(\hat{D}^{t})^{-\frac{1}{2}})^sX^t = [\lambda_1^s\mathbf{e}_1, \ldots, \lambda_N^s\mathbf{e}_N]\hat{X}^t\, ,
\end{equation}
where $\hat{X}^t = [\mathbf{e}_1, \ldots, \mathbf{e}_N]^{-1}{X}^t$. As the eigenvectors are orthogonal to each other, we can further rewrite $\hat{X}^t$ as $\hat{X}^t = [\mathbf{e}_1, \ldots, \mathbf{e}_N]^{T}{X}^t$. Since the absolute values of the eigenvalues are less than 1 except $\lambda_N$, $\lambda_i^s$ would approach 0 as $s$ go infinity, \ie, $\lim_{s\to \infty} \lambda_i^s = 0, \forall i \neq N$. Then, we can formulate the smoothed node feature representations $Z^t$ as follows:
\begin{equation}
    Z^t = [(\hat{D}^t_{11})^{\frac{1}{2}}, \ldots, (\hat{D}^t_{NN})^{\frac{1}{2}}]^T\hat{X}^t[N,:]\, ,
\end{equation}
where $\hat{X}^t[N,:]$ denotes the $N$-th row of $\hat{X}^t$. Therefore, with a larger $s$, the feature of nodes in $\mathcal{G}^t$ would converge to be proportional to the square root of the node degree. By multiplying the smoothed feature with $(\hat{D}^t_{ii})^{-\frac{1}{2}}$ for each node $i$ in the proposed task prototype construction, $\mathbf{p}^{\text{test}}$ and $\mathbf{p}^t$ would both become to be $\hat{X}^t[N,:]$, despite that they utilize different nodes to construct the task prototypes. Therefore, the distance between $\mathbf{p}^{\text{test}}$ and $\mathbf{p}^t$ would become zero with a large $s$.
\end{proof}
Note that we assume that the graphs of all tasks are not isolated in Theorem \ref{theorem}. Having isolated nodes in real-world graphs would deviate from this assumption, resulting in unsatisfied task identification. To tackle this issue, we propose a simple graph augmentation when constructing the task prototypes, which adds an edge between the isolated nodes and the randomly chosen non-isolated nodes to make the graph more connected. This helps the proposed task ID prediction method predict the task of test graphs more accurately. 

\restatwo*

\begin{proof}

As derived in Theorem \ref{theorem}, the task prototypes of task $t$ and $j$ with large steps of Laplacian smoothing are $\mathbf{p}^{\text{test}} = \hat{X}^t[N,:]$ and $\mathbf{p}^j = \hat{X}^j[N,:]$ respectively. Furthermore, the task prototype of task $t$ can be represented as $\mathbf{p}^t = (\mathbf{e}_N^t)^TX^t$. Based on the difference in node degrees and node attributes between task $t$ and $j$, the distance between the task prototypes can be represented as follows:
\begin{align}
    d(\mathbf{p}^{\text{test}}, \mathbf{p}^j) &= \|\mathbf{p}^t - \mathbf{p}^j\|_2\\
                &= \|(\mathbf{e}_N^t)^TX^t - (\mathbf{e}_N^j)^TX^j\|_2\\
                &= \|(\mathbf{e}_N^t)^TX^t - (\mathbf{e}_N^t + \mathbf{e})^T(X^t+\epsilon)\|_2\\
                &= \|(\mathbf{e}_N^t)^T \epsilon + (\mathbf{e})^TX^j\|_2
\end{align}
\end{proof}

\section{Details on Learning GNN Backbone}\label{sec:backbone}
We propose to construct a GNN backbone $f(\cdot)$ for graph prompt learning so that the task-specific information can be absorbed into the prompts. Specifically, the model $f(\cdot)$ is constructed based on the first task $\mathcal{G}^1 = (A^1, X^1)$ via graph contrastive learning due to its ability to obtain transferable models \cite{you2020graph,zhu2020deep} across graphs. 

To construct contrastive views for graph contrastive learning, two widely used graph augmentations are employed, \ie, edge removal and attribute masking \cite{zhu2020deep}. Specifically, the edge removal randomly drops a certain portion of existing edges in $\mathcal{G}^1$ and the attribute masking randomly masks a fraction of dimensions with zeros in node attributes, \ie,
\begin{equation}
    \tilde{A}^{1} = A^1 \circ R\, , \ \ \  \tilde{X}^1 = [\mathbf{x}_1^1\circ \mathbf{m}, \ldots, \mathbf{x}_N^1 \circ \mathbf{m}]^T\, ,  
\end{equation}
where $R \in \{0, 1\}^{N\times N}$ is the edge masking matrix whose entry is drawn from a Bernoulli distribution controlled by the edge removal probability, $\mathbf{m} \in \{0,1\}^F$ is the attribute masking vector whose entry is independently drawn from a Bernoulli distribution with the attribute masking ratio, and $\circ$ denotes the Hadamard product. By applying the graph augmentations to the original graph, the corrupted graph $\tilde{\mathcal{G}}^1 = (\tilde{A}^1, \tilde{X}^1)$ forms the contrastive view for the original graph $\mathcal{G}^1 = (A^1, X^1)$. Then, $\tilde{\mathcal{G}}^1$ and $\mathcal{G}^1$ are inputted to the shared GNN $f(\cdot)$ followed by non-linear projection $g(\cdot)$ to obtain the corresponding node embeddings, \ie, $\tilde{Z}^1 = g(f(\tilde{\mathcal{G}}^1))$ and $Z^1 = g(f(\mathcal{G}^1))$. For graph contrastive learning, the embeddings of the same node in different views are pulled closer while the embeddings of other nodes are pushed apart. The pairwise objective for each node pair $(\tilde{\mathbf{z}}^1_i, \mathbf{z}^1_i)$ can be formulated as:
\begin{equation}
\ell(\tilde{\mathbf{z}}^1_i, \mathbf{z}^1_i) = - \log \frac{e^{sim(\tilde{\mathbf{z}}^1_i, \mathbf{z}^1_i)/\tau}}{e^{sim(\tilde{\mathbf{z}}^1_i, \mathbf{z}^1_i)/\tau} + \sum_{j\neq i}^N e^{sim(\tilde{\mathbf{z}}^1_i, \mathbf{z}^1_j)/\tau} + \sum_{j\neq i}^N e^{sim(\tilde{\mathbf{z}}^1_i, \tilde{\mathbf{z}}^1_j)/\tau}}\, ,  
\end{equation}
where $sim(\cdot)$ represents the cosine similarity and $\tau$ is a temperature hyperparameter. Therefore, the overall objective can be defined as follows:
\begin{equation}\label{contrastive}
    \mathcal{L}_{\text{contrast}} = \frac{1}{2N} \sum_{i=1}^N (\ell(\tilde{\mathbf{z}}^1_i, \mathbf{z}^1_i) + \ell(\mathbf{z}^1_i, \tilde{\mathbf{z}}^1_i))\, .
\end{equation}

With the objective Eq.~\eqref{contrastive}, the model $f(\cdot)$ is optimized to learn discriminative representations of nodes. Despite the limited size of the first task, the learned model $f(\cdot)$ can effectively adapt to other tasks with the proposed graph prompt learning method.

\section{Algorithm}\label{algo}
The training and inference processes of the proposed method are summarized in Algorithm \ref{alg} and Algorithm \ref{alg2}, respectively.
\begin{algorithm}[ht]
\small
\caption{Training of TPP}
\begin{algorithmic}[1]
\label{alg}
\STATE {\textbf{Input:}} A series of graph learning tasks: $\{\mathcal{G}^1, \ldots, \mathcal{G}^T\}$, a graph neural network $f(\cdot)$.
\STATE {\textbf{Output:}} Graph neural network $f(\cdot)$, task prototype $\mathcal{P} = \{\mathbf{p}^1, \ldots, \mathbf{p}^T\}$, graph prompts $\{\Phi^1, \ldots, \Phi^T\}$, and classifiers $\{\varphi^1, \ldots, \varphi^T\}$.
\STATE Pre-train $f(\cdot)$ on $\mathcal{G}^1$ with graph contrastive learning (Eq.~\eqref{contrastive}).
\FOR{$t=1,\ldots, T$}
\STATE Obtain the task prototype $\mathbf{p}^t$ of task $t$ (Eq.~\eqref{train_prototype}).\\
\STATE Obtain $\bar{\mathcal{G}}^t$ with graph prompts $\Phi^t$ (Eq.~\eqref{gp}).\\
\STATE Optimize $\Phi^t$ and $\varphi^t$ by minimizing node classification loss (Eq.~\eqref{gp_learning}).
\ENDFOR
\end{algorithmic}
\end{algorithm}

\begin{algorithm}[ht]
\small
\caption{Inference in TPP}
\begin{algorithmic}[1]
\label{alg2}
\STATE {\textbf{Input:}} Graph neural network $f(\cdot)$, task prototypes $\mathcal{P} = \{\mathbf{p}^1, \ldots, \mathbf{p}^T\}$, graph prompts $\{\Phi^1, \ldots, \Phi^T\}$, classifiers $\{\varphi^1, \ldots, \varphi^T\}$, and the test graph $\mathcal{G}^{\text{test}}$.
\STATE {\textbf{Output:}} Prediction result.
\STATE Obtain the task prototype $\mathbf{p}^{\text{test}}$ (Eq.~\eqref{test_prototype}).\\
\STATE Infer the task $t^{\text{test}}$ of the test graph by querying $\mathcal{P}$ with $\mathbf{p}^{\text{test}}$ (Eq.~\eqref{testid}).
\STATE Retrieve the corresponding graph prompt $\Phi^{t^{\text{test}}}$ and classifier $\varphi^{t^{\text{test}}}$
\STATE Obtain $\bar{\mathcal{G}}^{\text{test}} = (A^{\text{test}}, X^{\text{test}} + \Phi^{t^{\text{test}}})$ (Eq.~\eqref{gp}).
\RETURN $\varphi^{t^{\text{test}}}(f(A^{\text{test}}, X^{\text{test}} + \Phi^{t^{\text{test}}}))$
\end{algorithmic}
\end{algorithm}

\section{Experimental Setup}

\subsection{More Implementation Details}

All the continual learning methods including the proposed method are implemented based on the GCL benchmark \cite{zhang2022cglb}\footnote{\url{https://github.com/QueuQ/CGLB/tree/master}}. For the memory-replay methods, we follow the settings in \cite{niu2024graph}. As in \cite{zhang2023ricci}, we employ a two-layer SGC \cite{wu2019simplifying} model as the backbone. Specifically, the hidden dimension is set to 256 for all methods. The number of training epochs of each graph learning task is 200 with Adam as the optimizer and the learning rate is set to 0.005 by default. 

For graph contrastive learning, the probabilities of edge removal and attribute masking are set to 0.2 and 0.3 respectively for all datasets. Besides, the learning rate is set to 0.001 with Adam optimizer, the training epochs are set to 200 and the temperature $\tau$ is 0.5 for all datasets.

The code is implemented with Pytorch (version: 1.10.0), DGL (version: 0.9.1), OGB (version: 1.3.6), and Python 3.8.5. Besides, all experiments are conducted on a Linux server with an Intel CPU (Intel Xeon E-2288G 3.7GHz) and a Nvidia GPU (Quadro RTX 6000).

\subsection{Details on the Design of the OODCIL Method}\label{oodbaseline}
We adapt the OOD detection-based CIL methods in \cite{kim2022theoretical,lin2024class} for empirical comparison under GCIL. To this end, following \cite{kim2022theoretical,lin2024class}, we propose to build an OOD detector for each graph task to perform task ID prediction and within-task classification simultaneously. Specifically, for task $t$ with $C$ classes, we aim to learn an OOD detector $f_o^t(\cdot)$ with $C+1$ classes. The extra class represents the OOD data for this task. In this paper, we implement the OOD detector as a two-layer SGC \cite{wu2019simplifying} model and take the data in replay buffer $Buf_{<t}$ as the OOD data for task $t$. Formally, the OOD detector is optimized by minimizing the following objective for task $t$:
\begin{equation}
    \min_{f_o^t(\cdot)} \ \ \mathbb{E}_{\mathcal{G}^t \bigcup Buf_{<t}}\left[\ell_{\text{CE}}(f_o^t(\mathcal{G}^t), Y^t) + \ell_{\text{CE}}(f_o^t(Buf_{<t}), Y^{Buf}) \right]\, ,
\end{equation}
where $Y^{Buf} = C+1$ represents the labels of data in the replay buffer $Buf_{<t}$. The buffer $Buf_{<t}$ is constructed via sampling previous graphs following ERGNN \cite{zhou2021overcoming}. Note that there are no replay data for task 1 to train the OOD detector. To overcome this issue, we propose to generate OOD data for task 1 based on graph augmentation \cite{zhu2020deep}.

After learning all the tasks, we follow Eq.~~\eqref{decompcil} to compute the class probabilities for the test samples. Specifically, a test sample is fed into all the learned OOD detectors to obtain the OOD score and class probabilities within the corresponding tasks. For example, for task $t$, the learned OOD detector would output the class and OOD probabilities $\{y_1^t, \ldots, y_C^t, o^t\}$ for the test sample. As the OOD score indicates the probability of the test sample is OOD to this task, the final probabilities of the test sample w.r.t. the classes in task $t$ can be calculated as $\{(1-o^t)y_1^t, \ldots, (1-o^t)y_C^t\}$. Among all the class probabilities of all tasks, the class with the highest probability is predicted as the class for the test sample.

\subsection{Details on Datasets}

Following \cite{zhang2022cglb}, four large GCIL datasets are used in our experiments.

\begin{itemize}
\item \textbf{CoraFull}\footnote{\url{https://docs.dgl.ai/en/1.1.x/generated/dgl.data.CoraFullDataset.html}}: It is a citation network containing 70 classes, where nodes represent papers and edges represent citation links between papers.
\item \textbf{Arxiv}\footnote{\url{https://ogb.stanford.edu/docs/nodeprop/\#ogbn-arxiv}}: It is also a citation network between all Computer Science (CS) ARXIV papers indexed by MAG \cite{sinha2015overview}. Each node in Arxiv denotes a CS paper and the edge between nodes represents a citation between them. The nodes are classified into 40 subject areas. The node features are computed as the average word-embedding of all words in the title and abstract.
\item \textbf{Reddit}\footnote{\url{https://docs.dgl.ai/en/1.1.x/generated/dgl.data.RedditDataset.html\#dgl.data.RedditDataset}}: It encompasses Reddit posts generated in September 2014, with each post classified into distinct communities or subreddits. Specifically, nodes represent individual posts, and the edges between posts exist if a user has commented on both posts. Node features are derived from various attributes, including post title, content, comments,  post score, and the number of comments.
\item \textbf{Products}\footnote{\url{https://ogb.stanford.edu/docs/nodeprop/\#ogbn-products}}: It is an Amazon product co-purchasing network, where nodes represent products sold in Amazon and the edges between nodes indicate that the products are purchased together. The node features are constructed with the dimensionality-reduced bag-of-words of the product descriptions.
\end{itemize}
The statistics of the used graph datasets are summarized in Table~\ref{datasta}.

\begin{table}[!t]
\caption{Key statistics of the graph datasets.}
\centering
\resizebox{0.65\textwidth}{!}{
\begin{tabular}{c|cccc}
\hline
Datasets & CoraFull & Arxiv & Reddit & Products \\ 
\hline
\# nodes & 19,793 &169,343 &227,853 &2,449,028  \\ 
\# edges & 130,622 & 1,166,243 & 114,615,892 & 61,859,036 \\
\# classes & 70 & 40 & 40 & 46 \\
\# tasks & 35 & 20 & 20 & 23 \\
\# Avg. nodes per task &660 &8,467 &11,393 &122,451  \\
\# Avg. edges per task &4,354 &58,312 &5,730,794 &2,689,523\\
\hline
\end{tabular}
}
\label{datasta}
\end{table}

\subsection{Descriptions of Baselines}
\begin{itemize}
\item \textbf{EWC} \cite{kirkpatrick2017overcoming} is a regularization-based method that adds a quadratic penalty on the model parameters according to their importance to the previous tasks to maintain the performance on previous tasks.
\item \textbf{MAS} \cite{aljundi2018memory} preserves the parameters important to previous tasks based on the sensitivity of the predictions to the changes in the parameters.
\item \textbf{GEM} \cite{lopez2017gradient} stores representative data in the episodic memory and proposes to modify the gradients of the current task with the gradient calculated
on the memory data to tackle the forgetting problem.
\item \textbf{LwF} \cite{li2017learning} employs knowledge distillation to minimize the discrepancy between the logits of the old and the new models to preserve the knowledge of the previous tasks.
\item \textbf{TWP} \cite{liu2021overcoming} proposes to preserve the important parameters in the topological aggregation and loss minimization for previous tasks via regularization terms.
\item \textbf{ERGNN} \cite{zhou2021overcoming} is a replay-based method that constructs memory data by storing representative nodes selected from previous tasks.
\item \textbf{SSM} \cite{zhang2022sparsified} incorporates the explicit topological information of selected nodes in the form of sparsified computation subgraphs into the memory for graph continual learning.
\item \textbf{SEM} \cite{zhang2023ricci} improves SSM by storing the most informative topological information via the Ricci curvature-based graph sparsification technique.
\item \textbf{CaT} \cite{liu2023cat} condenses each graph to a small synthesized replayed graph and stores it in a condensed graph memory with historical replay graphs. Moreover, graph continual learning is accomplished by updating the model directly with the condensed graph memory.
\item \textbf{DeLoMe} \cite{niu2024graph} learns lossless prototypical node representations as the memory to capture the holistic graph information of previous tasks. A debiased GCL loss function is further devised to address the data imbalance between the classes in the memory data and the current data.
\end{itemize}

\subsection{Evaluation Metrics: Average Accuracy and Average Forgetting}\label{metrics}
Specifically, average accuracy (AA) and average forgetting (AF) are calculated from the lower diagonal accuracy matrix $M\in \mathbb{R}^{T\times T}$, where $T$ is the number of the tasks. The entry $M_{tj} (t\geqslant j)$ denotes the classification accuracy on task $j$ after the model is optimized on task $t$. Therefore, the row in $M_{t,:}$ records the performance on all previous tasks after learning task $t$, and the column $M_{:,j}$ describes the dynamic of performance on task $j$ when learning different tasks. After learning all the $T$ tasks, the overall average accuracy (AA) and average forgetting (AF) can be calculated as follows:
\begin{equation}
\text{AA} = \frac{\sum_{j=1}^T M_{Tj}}{T}\, , \ \ \ \text{AF} = \frac{\sum_{j=1}^{T-1} (M_{Tj} - M_{jj})}{T-1}\, .
\end{equation}

To sum up, AA evaluates the average performance of the model on all the learned tasks after learning all the $T$ tasks, and AF describes how the performance of previous tasks is affected when learning the current task. A positive AF indicates learning the current task would facilitate the previous tasks and vice versa. For both AA and AF, the higher value denotes better GCL performance.

\subsection{More Experimental Results}

\paragraph{Accuracy of task prediction with different task formulations.}
We further evaluate the accuracy of task prediction with different task formulations, \ie, numerically descending and random ordering. The results are shown in Table~\ref{tp_diff_formulation}. The results demonstrate that the proposed TP can accurately predict the task IDs in terms of all formulations.

\begin{table}[!t]
\caption{The accuracy of task prediction with other task formulations.}
\centering
\resizebox{0.6\textwidth}{!}{
\begin{tabular}{c|ccccc}
\hline
Task Formulation&	CoraFull&	Arxiv&	Reddit&	Prodcuts\\
\hline
Ascending Order (Reported)&	100&	100&	100&	100\\
Descending Order&	100&	100&	100&	100\\
Random Order&	100&	100&	100&	100\\
\hline
\end{tabular}}
\label{tp_diff_formulation}
\end{table}

\paragraph{Comparison to Task-Specific Models}\label{com_tsp}
As discussed in the main paper, another straightforward way to overcome forgetting is to learn task-specific models for each task. We further compare the number of additional parameters and the performance of the proposed graph prompting with task-specific models besides the parameters of the backbone. The two methods can both achieve forget-free for GCIL with the proposed task identification. The results are reported in Table~\ref{comp_gp_allmodels} where $F$ is the dimensionality of the node attribute, $d$ is the number of hidden units of SGC and $T$ denotes the number of tasks. From the table, we can see that the proposed methods can achieve very close performance to task-specific models while introducing significantly small additional parameters for all tasks in GCIL.

\begin{table}[!t]
\centering
\caption{Additional parameters and performance (AA\%) of the proposed graph prompting and task-specific models.}
\resizebox{0.75\textwidth}{!}{
\begin{tabular}{c|ccccc}
\hline
Method & Additional Parameters &CoraFull & Arxiv & Reddit & Products  \\
\hline
Task-specific Models & $(F+d)dT$   & 94.3 & 86.8 & 99.5 & 96.3\\
Graph Prompting & $3FT$  & 93.4 & 85.4 & 99.5 & 94.0\\
\hline
\end{tabular}}
\label{comp_gp_allmodels}
\end{table}

\section{Time Complexity Analysis}\label{timecomplexity}
The proposed method first learns a GNN backbone based on the first task with graph contrastive learning. Then, the model remains frozen and the task-specific graph prompts and classifiers are optimized to capture the knowledge of each task separately. In experiments, we employ a two-layer SGC \cite{wu2019simplifying} as the GNN backbone model with the number of hidden units in all layers as $d$. Suppose all tasks contain the same number of nodes as $N$, the time complexity of the graph contrastive learning on the first task is $\mathcal{O}((4|A^1|F+2NdF+3Nd^2)E_1)$, where $|A^1|$ returns of the number of edges of the $\mathcal{G}^1$, $F$ represents the dimensionality of node attributes and $E_1$ is the number of training epochs. After that, we propose to freeze the learned model and learn graph prompts and classifiers for each task. In our experiments, we set the size of each graph prompt to $k$ and implement the classification head as a single-layer MLP outputting the probabilities of $C$ classes. Given the number of the training epochs $E_2$, the time complexity of optimizing the graph prompt and classifier is $\mathcal{O}((4kNF+2dNC)E_2)$, which includes both the forward and backward propagation. Despite the graph model being frozen, the forward and backward propagation of the model are still needed to optimize the task-specific graph prompts and classifiers. Given the number of tasks $T$ in GCIL, the overall time complexity of the proposed method is $\mathcal{O}((4|A^1|F+2NdF+3Nd^2)E_1 + \sum_{i=1}^T(4|A^1|F+2NdF+3Nd^2+4kNF+2dNC)E_2)$, which is linear to the number of nodes, the number of edges, and the number of node attributes involved in all the graph tasks.

\begin{table}[!t]
\caption{Total training time and inference time (seconds) for different methods on CoraFull.}
\centering
\resizebox{0.5\textwidth}{!}{
\begin{tabular}{c|cccc}
\hline
Methods&  TWP & SSM & DeMoLe & TPP (Ours)\\
\hline
Training Time &151.6 &254.9 &304.6&23.6\\
Inference Time &0.3 &0.3&0.3&0.4\\
\hline
\end{tabular}}
\label{time}
\end{table}

In Table~\ref{time}, we report the total training time and inference time of all tasks on the CoraFull dataset, with representative models TWP \cite{liu2021overcoming}, SSM \cite{zhang2022sparsified} and DeLoMe \cite{niu2024graph} as the baselines,  where TWP is a regularization-based method and the other two baselines are replay-based methods. From the table, we can see that replay-based methods require more time for training. This is because they typically need to construct the memory buffer based on different strategies for replaying with the new graph data. Moreover, these memory buffers accumulate to store information from all learned tasks, resulting in the size of them becoming larger with more tasks learned. Notably, our method requires the smallest amount of training time as it does not introduce replaying memory and regularization terms. As for the inference time, the three baselines require the same amount of time as they all learn a model for all tasks. Our method requires slightly more inference time due to its task ID prediction module. Thus, there is a computational overhead in the inference of our method TPP, but it is trivial.

\end{document}